\theoremstyle{plain}
\newtheorem{assumption}{Assumption}
\newtheorem{theorem}{Theorem}
\newtheorem{lemma}{Lemma}
\newtheorem{corollary}{Corollary}
\newtheorem{example}{Example}
\theoremstyle{remark}
\newtheorem{remark}{Remark}
\newcommand{\Ac}{\mathcal{A}}
\newcommand{\Bc}{\mathcal{B}}
\newcommand{\Dc}{\mathcal{D}}
\newcommand{\Gc}{\mathcal{G}}
\newcommand{\Hc}{\mathcal{H}}
\newcommand{\Mc}{\mathcal{M}}
\newcommand{\Xc}{\mathcal{X}}
\newcommand{\Yc}{\mathcal{Y}}
\newcommand{\Zc}{\mathcal{Z}}
\newcommand{\bb}{\boldsymbol{b}}
\newcommand{\br}{\boldsymbol{r}}
\newcommand{\by}{\boldsymbol{y}}
\newcommand{\Ab}{\mathbb{A}}
\newcommand{\Bb}{\mathbb{B}}
\newcommand{\Eb}{\mathbb{E}}
\newcommand{\Pb}{\mathbb{P}}
\newcommand{\Rb}{\mathbb{R}}
\newcommand{\vect}[1]{\text{vec}(#1)}
\newcommand{\tr}[1]{\text{tr}\left(#1\right)}
\newcommand{\sign}[1]{\text{sign}(#1)}
\DeclareMathOperator*{\argmax}{arg\,max}
\DeclareMathOperator*{\argmin}{arg\,min}
\newcommand{\alg}{{\normalfont\texttt{TOFU}} }
\newcommand{\algg}{{\normalfont\texttt{TOFU}}}
\begin{document}

\title{On High-dimensional and Low-rank Tensor Bandits\thanks{ The work of CSs was supported in part by the US National Science Foundation (NSF) under awards ECCS-2029978, ECCS-2143559, CNS-2002902, the Virginia Commonwealth Cyber Initiative (CCI) smart cities project, and the Bloomberg Data Science Ph.D. Fellowship. The work of NS was partially supported by NSF IIS-1908070.}} 

\author{%
}

%
\author{%
  \IEEEauthorblockN{Chengshuai Shi, Cong Shen, and Nicholas D. Sidiropoulos}
  \IEEEauthorblockA{
                    University of Virginia\\
                    Charlottesville, VA 22904, USA\\
                    \{cs7ync, cong, nikos\}@virginia.edu
                    }
                  }

\maketitle

\begin{abstract}
Most existing studies on linear bandits focus on a one-dimensional characterization of the overall system. While being representative, this formulation may fail to model applications with high-dimensional but favorable structures, such as the low-rank tensor representation for recommender systems. To address this limitation, this work studies a general tensor bandits model, where actions and system parameters are represented by tensors as opposed to vectors, and we particularly focus on the case that the unknown system tensor is low-rank. A novel bandit algorithm, coined \alg (Tensor Optimism in the Face of Uncertainty), is developed. \alg first leverages flexible tensor regression techniques to estimate low-dimensional subspaces associated with the system tensor. These estimates are then utilized to convert the original problem to a new one with norm constraints on its system parameters. Lastly, a norm-constrained bandit subroutine is adopted by \algg, which utilizes these constraints to avoid exploring the entire high-dimensional parameter space. Theoretical analyses show that \alg improves the best-known regret upper bound by a multiplicative factor that grows exponentially in the system order. A novel performance lower bound is also established, which further corroborates the efficiency of \algg.
\end{abstract}

\section{Introduction}
The multi-armed bandits (MAB) framework \cite{lattimore2020bandit, bubeck2012regret} has attracted growing interest in recent years as it can characterize a broad range of applications requiring sequential decision-making. An active research area in MAB is linear bandits \cite{abbasi2011improved,chu2011contextual}, where the actions are characterized by feature vectors. While being representative, this one-dimensional (i.e., vectorized) formulation may fail to capture practical applications with high-dimensional but favorable structures. We use the recommender system model to illustrate this limitation. An online shopping platform needs an effective advertising mechanism for its products. However, instead of only deciding which item to promote (as typically considered in standard linear bandits studies), the marketer also needs to consider many other factors. For example, the marketer may plan where to place to promotion (e.g., on the sidebar or as a pop-up) and how to highlight the promotion (e.g., emphasizing the discounts or the product quality). The overall strategy with all these factors will determine the effectiveness of this promotion. 

Traditional recommendation strategies often leverage tensor formulations to capture the joint decisions concerning many associated factors \cite{frolov2017tensor,symeonidis2016matrix,papalexakis2016tensors}. However, as mentioned, existing bandits strategies are largely restricted to vectorized systems. Although vectorizing multi-dimensional systems can preserve element-wise information, structural information is often lost. Especially, as recognized in \cite{frolov2017tensor,symeonidis2016matrix,papalexakis2016tensors}, tensors formulated to characterize recommender systems often process the attractive property of \emph{low-rankness} which, however, no longer exists in the vectorized systems and thus cannot be exploited.

In this work, we study a general problem of tensor bandits for online decision-making, which extends the standard one-dimensional setting of linear bandits to a multi-dimensional and multi-linear one. In particular, each action is represented by a tensor (as opposed to a vector), and the mean reward of playing an action is the inner product between its feature tensor and an unknown system tensor. Then, motivated by various practical problems, a low-rank assumption is imposed on the system tensor, and this work aims at leveraging the low-rank knowledge to facilitate bandit learning. The main contributions are summarized in the following.

$\bullet$ The studied tensor bandits framework is general in the sense that it does not have restrictions on the system dimension and the action structure, which contributes to the generalization of linear bandits and extends the applicability of the MAB study; see Appendix~\ref{sec:related} for related works.

$\bullet$ A novel learning algorithm, \alg (Tensor Optimism in the Face of Uncertainty), is proposed for the challenging problem of low-rank tensor bandits. \alg adopts flexible designs of tensor regressions to estimate low-dimensional subspaces associated with the unknown system tensor. Then, these estimates are utilized to convert the original problem into a new one, where the low-rank property is transformed into the knowledge of norm constraints on the system parameters. \alg finally adopts the LowOFUL subroutine \cite{jun2019bilinear} to incorporate these norm constraints in bandit learning to avoid exploring the entire high-dimensional parameter space. 

\begin{table}
\caption{Related Works and Regret Comparisons}
\vspace{-0.2in}
\label{tbl:summary}
\begin{center}
\begin{tabular}{c | c } 
 \hline
 \textbf{Algorithm} &  \textbf{Regret} \\ 
 \hline
 Vectorized LinUCB \cite{abbasi2011improved} & $\tilde{O}(d^N\sqrt{T})$\\
  Matricized ESTT/ESTS \cite{kangefficient} & $\tilde{O}(d^{\lceil \frac{N}{2}\rceil}r^{\lfloor \frac{N}{2}\rfloor}\sqrt{T})$\\
  Tensor Elim. \cite{zhou2020stochastic}; modified to general actions & $\tilde{O}(d^{N-1}r\sqrt{T})$\\
  \rowcolor[gray]{0.9}
  \alg (Corollary~\ref{col:upper}) & $\tilde{O}(d^2r^{N-2}\sqrt{T})$\\
  \rowcolor[gray]{0.9}
 Lower bound (Theorem~\ref{thm:lower}) &  $\Omega(r^N\sqrt{T})$\\
 \hline
\end{tabular}
\end{center}
\begin{center}
{ \scriptsize   The time horizon is $T$. The considered system tensor is order-$N$ and of size $(d, d, \cdots, d)$. It also has a multi-linear rank $(r, r, \cdots, r)$, where $r\leq d$.}
\end{center}
\vspace{-0.3in}
\end{table}

$\bullet$ Theoretical analyses demonstrate the effectiveness and efficiency of \alg with performance guarantees. In particular, the regret of \alg improves the best-known regret upper bound by a multiplicative factor of order $O((d/r)^{\lceil N/2\rceil-2})$, where $N$ is the order of the considered system tensor, $d$ is the length of its modes, and $r\leq d$ denotes its multi-linear rank. Note that this improvement becomes more significant in high-dimensional problems, i.e., growing exponentially w.r.t. $N$. A novel regret lower bound is further established, and \alg is shown to be sub-optimal only up to a factor of $O((d/r)^2)$, which does not scale with $N$. The baselines and the main results are summarized in Table~\ref{tbl:summary}.

\section{Problem Formulation}\label{sec:formulation}
\subsection{Preliminaries on Tensors}
An order-$N$ tensor $\Yc \in \Rb^{d_1\times d_2\times \cdots \times d_N}$ has $\prod_{n\in [N]}d_n$ elements and can be viewed as a hyper-rectangle with edges (referred to as modes) of lengths $(d_1, d_2, \cdots, d_N)$ (see \cite{kolda2009tensor, sidiropoulos2017tensor} for comprehensive reviews). The tensor elements are identified to by their indices along each mode, e.g., $\Yc_{i_1,i_2,\cdots,i_N}$ denotes the $(i_1, i_2, \cdots, i_N)$-th element of $\Yc$, while a block is denoted by the index set of its contained elements, e.g., the block $\Yc_{I_1, I_2, \cdots, I_N}$ represents the elements with indices $(i_1, i_2, \cdots, i_N) \in I_1\times I_2\times \cdots \times I_N$. Moreover, fibers are one-dimensional sections of a tensor (as rows and columns in a matrix); thus an order-$N$ tensor has $N$ types of fibers.

\noindent \textbf{Tensor operations.} The inner product between tensor $\Yc$ and a same-shape tensor $\Bc \in \Rb^{d_1\times d_2\times \cdots \times d_N}$ is the sum of the products of their elements:
\begin{equation*}
    \langle \Bc, \Yc \rangle = \sum_{i_1 \in [d_1]} \sum_{i_2 \in [d_2]} \cdots \sum_{i_N \in [d_N]} \Bc_{i_1,i_2,\cdots, i_N}\Yc_{i_1,i_2,\cdots, i_N}.
\end{equation*}
The Frobenius norm is then defined as $\|\Yc\|_F := \sqrt{\langle \Yc, \Yc\rangle}$.

The mode-$n$ (matrix) product $\Yc\times_n B$ between tensor $\Yc$ and matrix $B\in \Rb^{d'_n\times d_n}$ outputs an order-$N$ tensor of size $(d_1, \cdots, d_{n-1}, d'_{n}, d_{n+1}, \cdots, d_N)$ with elements:
\begin{equation*}
    \left(\Yc\times_n B\right)_{i_1,\cdots, i_{n-1}, i'_n, i_{n+1}, \cdots, i_N} = \sum_{i_n\in [d_n]} B_{i'_n, i_n}\Yc_{i_1,\cdots, i_n, \cdots,i_N}.
\end{equation*}

In addition, matricization is the process of reordering tensor elements into a matrix. The mode-$n$ matricization of tensor $\Yc$ is denoted as $\Mc_n(\Yc)$, whose columns are mode-$n$ fibers of tensor $\Yc$ and dimensions are $(d_n, \prod_{n'\in [N]/\{n\}} d_{n'})$. Similarly, vectorization converts a tensor to a vector with all its elements, which is denoted as $\vect{\Yc}$ for tensor $\Yc$.

\noindent\textbf{Tucker decomposition.} Similarly to matrices, tensor decomposition is a useful tool to characterize the structure of tensors. In this work, we mainly focus on the Tucker decomposition illustrated as follows: for tensor $\Yc$, with $r_n$ denoting the rank of its mode-$n$ matricization, i.e., $r_n = \text{rank}(\Mc_n(\Yc))$, and $U_n$ the corresponding left singular vectors of $\Mc_n(\Yc)$, there exists a core tensor $\Gc \in \Rb^{r_1\times r_2\times \cdots \times r_N}$ such that
\begin{equation*}
    \Yc = \Gc \times_1 U_1 \times_2 U_2 \times_3 \cdots \times_N U_N =: \Gc \times_{n\in [N]} U_n, 
\end{equation*}
which can be denoted as $\Yc = [[\Gc; U_1, \cdots, U_N]]$, and the tuple $(r_1, \cdots, r_N)$ is called the multi-linear rank of tensor $\Yc$.

\noindent\textbf{Additional notations.} Typically, lowercase characters (e.g., $x$) stand for scalars while vectors are denoted with bold lowercase characters (e.g., $\boldsymbol{x}$). Capital characters (e.g., $X$) are used for matrices, and calligraphic capital characters (e.g., $\Xc$) for tensors.  In addition, $\|\cdot\|_2$ denotes the Euclidean norm for vectors and the spectral norm for matrices; for a vector $\by$ and a matrix $\Gamma$, we denote $\|\by\|_\Gamma := \sqrt{\by^\top \Gamma \by}$.

\subsection{Tensor Bandits}
This work considers the following multi-dimensional bandit problem. At each time step $t\in [T]$, the player has access to an action set $\Ab_t \subseteq \Rb^{d_1\times d_2\times \cdots \times d_N}$, i.e., the elements are tensors of size $(d_1, d_2, \cdots, d_N)$. She needs to select one action $\Ac_t$ from the set $\Ab_t$, and this action would bring her a reward of
\begin{equation}\label{eqn:tensor_reward}
    r_t = \langle \Ac_t, \Xc \rangle + \varepsilon_t,
\end{equation}
where $\Xc\in \Rb^{d_1\times d_2\times \cdots \times d_N}$ is an unknown tensor of system parameters and $\varepsilon_t$ is an independent $1$-sub-Gaussian noise. We further denote $\mu_{\Ac}: = \langle\Ac, \Xc \rangle$ as the expected reward of action $\Ac$ and, without loss of generality, assume that $\|\Xc\|_F\leq C$  for $C>0$ and $\max\{\|\Ac\|_F: \Ac\in \cup_{t\in [T]}\Ab_t\}\leq 1$. 

The agent's objective is to minimize her regret against the per-step optimal actions $\Ac_t^* := \argmax_{\Ac\in \Ab_t} \langle \Ac, \Xc \rangle$ \cite{lattimore2020bandit}:
\begin{equation*}
    R(T) := \sum\nolimits_{t\in [T]} \left(\langle \Ac_t^*, \Xc \rangle  - \langle \Ac_t, \Xc \rangle \right).
\end{equation*}

\subsection{The Low-rank Structure}
It is possible to view the above problem as a $\prod_{n\in [N]}d_n$-dimensional linear bandits problem by vectorizing the action tensor $\Ac_t$ and the system tensor $\Xc$, which can then be solved by known algorithms \cite{abbasi2011improved, chu2011contextual}. However, the high-dimensional structures of this system are not preserved by vectorization. Especially, one of the most commonly observed structures in real-world applications (e.g., recommender systems \cite{frolov2017tensor,symeonidis2016matrix,papalexakis2016tensors} and healthcare \cite{zhou2013tensor, li2018tucker, yaman2019low,kanatsoulis2019tensor}) is the low-rankness. We give the general multi-linear rank assumption of $\Xc$ as follows.  
\begin{assumption}\label{aspt:low_rank}
    The unknown system tensor $\Xc$ has a multi-linear rank of $(r_1, r_2, \cdots, r_N)$ and can be decomposed as $\Xc = [[\Gc; U_1, U_2, \cdots, U_N]]$. 
\end{assumption}

To simplify the notations, in the following, it is assumed that $d_1 = \cdots = d_N = d$ while $r_1 = \cdots = r_N = r$. 
In practice, the rank $r$ is often much smaller than the mode length $d$, especially for very large $d$. Hence, the following problem is at the center of this work: \emph{can bandit algorithms be designed to exploit the low-rank structure of the system tensor?} Especially, the key question is how much performance improvement we can achieve, compared with the naive regret of $\tilde{O}(d^N\sqrt{T})$\cite{abbasi2011improved} that is obtained by directly vectorizing the actions and the system.

Note that the design and analysis can be extended to the general case of $d_1 \neq \cdots \neq d_N$ and $r_1 \neq \cdots \neq r_N$ with minor notation modifications. Also,  without loss of generality, it is assumed that  $N$ is of order $O(1)$ (i.e., a constant) and $N \geq 3$.

\begin{figure*}
    \centering
    \setlength{\abovecaptionskip}{-10pt}
    \includegraphics[width=\linewidth]{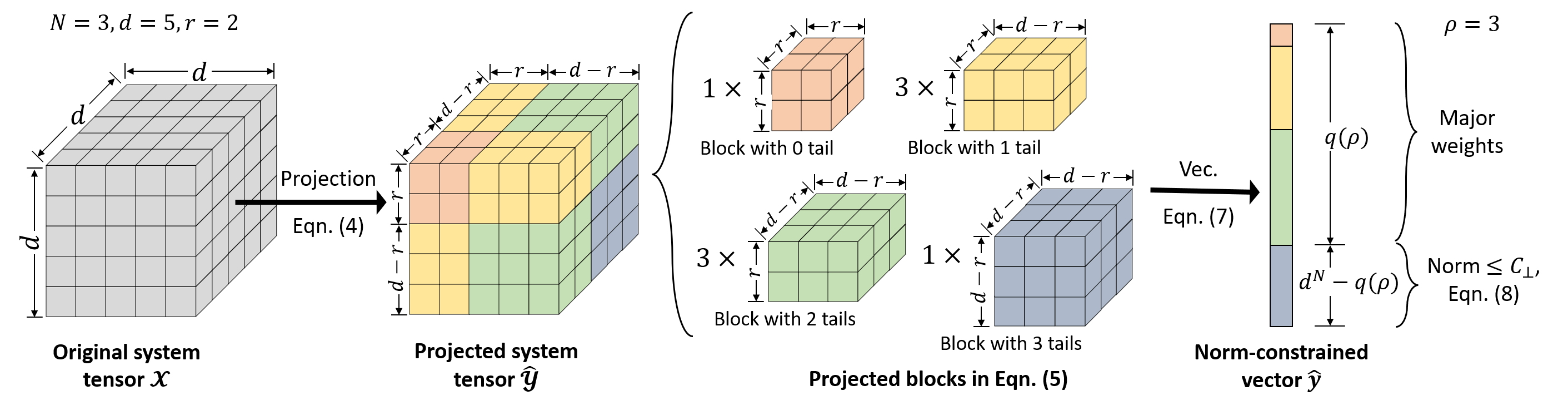}
    \caption{An illustration of the projection in Eqn.~\eqref{eqn:project_system}, the blocks with varying amounts of tails in Eqn.~\eqref{eqn:project_block}, and the vectorization in Eqn.~\eqref{eqn:vector_system}, where an order-$3$ tensor (i.e., $N=3$) is adopted as an example with $d=5$ and $ r=2$. The projection is performed with the low-dimensional subspaces estimated in Phase A (see Eqn.~\eqref{eqn:project_system}) and the projected system tensor is shown to have blocks with zero to $N$ tails. The value $q(\rho)$  is the number of elements in the projected blocks with less than $\rho$ tails as specified in Eqn.~\eqref{eqn:q}, and here an input $\rho=3$ is adopted which results in $q(3) = d^3 - (d-r)^3$. The norm constraint in Eqn.~\eqref{eqn:norm_bound} is on the other $d^N - q(\rho)$ elements, i.e., the projected blocks with at least $\rho$ tails. This constraint is leveraged in Phase B to avoid exploring the entire high-dimensional parameter space. Note that here with $N=3$ and $\rho=3$, the designed norm constraint is only on the block with three tails, while with a larger $N$, the constraint will cover more blocks if still using $\rho=3$ (as in Corollary~\ref{col:upper}), e.g., blocks with three and four tails for $N=4$.}
    \label{fig:system}
\vspace{-0.2in}
\end{figure*}

\section{The TOFU Algorithm}\label{sec:alg}
The \alg algorithm (presented in Alg.~\ref{alg}) has two phases: A and B. Phase A aims at estimating the unknown system tensor $\Xc$ up to a certain precision, especially its low-dimensional subspaces. With this estimate, the original bandit problem can be reformulated, such that the new problem has (approximately) a small number of effective system parameters because the other parameters have small norms. Then, in Phase B, an OFU (optimism in the face of uncertainty)-style subroutine is adopted to solve this norm-constrained problem. 

\subsection{Phase A: Estimating Low-dimensional Subspaces}
Phase A adopts techniques in low-rank tensor regression (also known as low-rank tensor factorization or completion from linear measurements) \cite{gandy2011tensor,jain2014provable,kanatsoulis2019tensor,zhang2020islet, ahmed2020tensor}. Especially, it considers the problem of estimating a low-rank tensor $\Xc$ by a collection of data $\{(\Ac_t, r_t): t\in [T_1]\}$ that are associated with $\Xc$ through Eqn.~\eqref{eqn:tensor_reward}, where $T_1$ is the amount of collected data samples. 

Using the bandits terminology, Phase A is designed to last $T_1$ steps, during which a dataset of $T_1$ data samples is collected. With such a dataset, an estimate of $\Xc$, denoted as $\hat{\Xc}$, can be obtained via low-rank tensor regression techniques. From another perspective, Phase A can be interpreted as using forced explorations to estimate the system tensor $\Xc$.

Clearly, the estimation quality is related to the collected data, especially the selected arms and the noises. Also, different designs of low-rank tensor regression require different data collection procedures. To provide a general discussion and ease the presentation, we denote the adopted tensor regression algorithm as $\texttt{TRalg}(\cdot)$ and consider the following assumption:
\begin{assumption}\label{aspt:phase_A}
 The dataset $\Dc_A = \{(A_t, r_t): t\in [T_1]\}$ and the tensor regression algorithm $\normalfont{\texttt{TRalg}(\cdot)}$ are such that the output $\hat{\Xc}\gets \normalfont{\texttt{TRalg}}(\Dc_A)$ satisfies $\|\hat{\Xc} - \Xc\|_F \leq \eta(T_1)$ for a problem-dependent function $\eta(T_1)$.
\end{assumption}
Under this assumption, regret bounds can be established to depend on the generic function $\eta(T_1)$. Specific dataset configurations and tensor regression algorithms can be  incorporated to establish concrete forms of $\eta(T_1)$, which leads to the corresponding problem-dependent regret bounds. Examples of datasets and algorithms that satisfy Assumption~\ref{aspt:phase_A} with a high probability can be found in Examples~\ref{exp:tensor_regression_subgaussian} and \ref{exp:tensor_regression_one_hot} in Sec.~\ref{sec:theory} with $\eta(T_1)= \tilde{O}(\sqrt{d^{N}(dr+r^N)/T_1})$. 

\subsection{From Subspace Estimates to Norm Constraints}
Intuitively, the estimated $\hat{\Xc}$ and its decomposition matrices $(\hat{U}_1, \hat{U}_2, \cdots, \hat{U}_N)$ from Phase A should help the task of bandit learning. To achieve this goal, the following projection is generalized from matrix bandits \cite{jun2019bilinear}. In particular, a new arm $\hat{\Bc}$ can be constructed from the original arm $\Ac$ as follows:
\begin{equation}\label{eqn:project_action}
    \hat{\Bc} = \Ac\times_{n\in [N]} [\hat{U}_n, \hat{U}_{n, \perp}]^\top \in \Rb^{d\times d\times \cdots \times d},
\end{equation}
where $\hat{U}_{n, \perp}$ is a set of orthogonal basis in the complementary subspace of $\hat{U}_n$ and $[\cdot, \cdot]$ denotes the concatenation of two matrices. In other words, Eqn.~\eqref{eqn:project_action} projects the actions to the estimated low-dimensional subspaces and their complements.

After some algebraic manipulations, we can establish that
\begin{align}\label{eqn:eqn}
    \mu_\Ac = \langle \Ac, \Xc \rangle = \langle \hat{\Bc}, \hat{\Yc} \rangle
\end{align}
where $\hat{\Yc}\in \Rb^{d\times d\times \cdots \times d}$ is a projected system tensor defined as
\begin{equation}\label{eqn:project_system}
    \begin{aligned}
       \hat{\Yc}&:= \Xc\times_{n\in [N]} [\hat{U}_n, \hat{U}_{n, \perp}]^\top \\
       &=\Gc\times_{n\in [N]} ([\hat{U}_n, \hat{U}_{n, \perp}]^\top U_n) .
    \end{aligned}
\end{equation}
Thus, the original tensor bandits problem can be reformulated with the action set $\hat{\Bb}_t := \{\hat{\Bc} = \Ac \times_{n\in [N]} [\hat{U}_n, \hat{U}_{n, \perp}]^\top: \Ac\in \Ab_t\}$ and the system tensor $\hat{\Yc}$ defined above. While this problem still has $d^N$ elements and the system tensor $\hat{\Yc}$ is still unknown, it possesses norm constraints on elements in many blocks of $\hat{\Yc}$. In other words, the above projection is capable of turning the low-rank property into the knowledge of parameter norms, which are specified in the following.

Especially, if $\hat{U}_n$ is estimated precisely enough, we can guarantee that $\|\hat{U}_{n,\perp}^\top U_n\|_2$ is relatively small. In particular, under Assumption~\ref{aspt:phase_A}, it holds that $\|\hat{U}_{n,\perp}^\top U_n\|_2 = \tilde{O}(\eta(T_1))$ (see Lemma~\ref{lem:tail_bound}). Then, with a closer look at the projected tensor $\hat{\Yc}$, the following observation can be made: 
 elements in many blocks are close to zero. In particular, the block
\begin{equation}\label{eqn:project_block}
\begin{aligned}
    &\hat{\Yc}_{\underbrace{:r, :r, \cdots, :r}_{\text{$N-k$ modes}}, \underbrace{r+1:,, r+1:, \cdots, r+1:}_{\text{$k$ modes}}} \\
    &= \Gc\times_{n\in [N-k]} (\hat{U}_n^\top U_n) \underbrace{\times_{n'\in [N-k+1: N]} (\hat{U}_{n',\perp}^\top U_{n'})}_{\text{with $k$ tails}},
\end{aligned}
\end{equation}
has a norm that scales with $\tilde{O}((\eta(T_1))^{k})$ (see Lemma~\ref{lem:k_tail_bound}), where the notation $:r$ denotes the set $[r]$ while $r+1:$ represents the set $[r+1:d]$ (thus the above block denotes the $r^{N-k}(d-r)^k$ tensor elements with indices $(i_1, \cdots, i_{N-k}, i_{N-k+1}, \cdots i_N)\in [r] \times \cdots \times [r] \times [r+1, d]\times \cdots \times [r+1, d]\}$). This property holds similarly for other symmetrical blocks. As $\eta(T_1)$ typically decays with $T_1$ (because the estimation quality should increase with more data samples), the norm of the above block will become smaller as the length of Phase A increases, which can be captured by a norm constraint that will be described later.

To ease the exposition, we refer to the above block and its symmetrical ones as blocks with $k$ tails, meaning the indices of their elements have $k$ modes in the interval $[r+1:d]$ (i.e., the tail). An illustration of these blocks in an order-$3$ tensor is provided in Fig.~\ref{fig:system}. Furthermore, the number of tensor elements in blocks with less than $k$ tails is denoted as
\begin{equation}\label{eqn:q}
    q(k) := \sum\nolimits_{i=0}^{k-1}{N \choose i} r^{N-i} (d-r)^{i},
\end{equation}
which is an important quantity in later designs and analyses.
\begin{remark}\label{re:one_hot}
    Compared with previous works on matrix and tensor bandits \cite{jun2019bilinear, lu2021low, kangefficient, zhou2020stochastic}, the essence of this work is the observation that norm constraints commonly exist for blocks with different numbers of tails. In particular, \cite{zhou2020stochastic} directly extends \cite{jun2019bilinear, lu2021low} and only leverages the norm constraint on the block with $N$ tails. Instead, Section~\ref{sec:theory} will illustrate that the norm constraints on blocks with at least three tails can be leveraged together under a suitable $\eta(T_1)$, which then leads to the obtained performance improvement.
\end{remark}

\begin{algorithm}[htb]
    \small
	\caption{\algg}
	\label{alg}
	\begin{algorithmic}[1]
		\Require $T$; rank $r$; dimension $N$ and $d$; tensor regression alg. $\texttt{TRalg}$; length of Phase A $T_1$; confidence parameter $\delta$; tails $\rho$
        \State Sample $\Ac_t \in \Ab_t$ following the arm selection rule required by $\texttt{TRalg}(\cdot)$ and observe reward $r_t$, for $t \in [T_1]$ \Comment{\textcolor{magenta}{\textit{Phase A}}}
        \State Estimate $\hat{\Xc}= [[\hat{\Gc}; \hat{U}_1, \cdots, \hat{U}_N]]$ with $\texttt{TRalg}$ using $\Dc_A = \{(\Ac_t, r_t): t\in [T_1]\}$, i.e., $\hat{\Xc} \gets \texttt{TRalg}(\Dc_A)$
        \State Set $C_\perp, \lambda, \lambda_\perp$ as in Theorem~\ref{thm:upper} \Comment{\textcolor{cyan}{\textit{Phase B}}}
        \State Initialize $\Lambda(\rho) \gets \text{diag}(\lambda, \cdots, \lambda, \lambda_\perp, \cdots, \lambda_\perp)$, where the first $q(\rho)$ elements are $\lambda$; $\Psi_{T_1} \gets \{\by\in \Rb^{d^N}: \|\by\|_2 \leq C\}$
        \For{$t = T_1+1, \cdots, T$}
        \State Set $\hat{\Bb}_t\gets \{\Bc_t = \Ac_t\times_{n\in [N]} [\hat{U}_n, \hat{U}_{n,\perp}]^\top: \Ac_t \in \Ab_t\}$
        \State Get $\hat{\bb}_t \gets \argmax_{\hat{\bb}_t\in \vect{\hat{\Bb}_t}}\max_{\by \in \Psi_{t-1}}\langle \hat{\bb}_t, \by \rangle$
        \State Pull arm $\Ac_t$ corresponding to $\hat{\bb}_t$ and obtain reward $r_t$
        \State Update $\hat{B}_t$ with rows $\{\bb_\tau^\top: \tau \in (T_1, t]\}$ 
        \State Update $\br_t$ with elements $\{r_\tau: \tau \in (T_1, t]\}$
        \State Update $V_t \gets \Lambda(\rho) + \hat{B}_t^\top \hat{B}_t$ and $\bar{\by} \gets  V_t^{-1} \hat{B}_t^\top \br_t$
        \State Update $\sqrt{\beta_t} \gets \sqrt{\log(\frac{\det(V_t)}{\det(\Lambda(\rho))\delta^2})} + \sqrt{\lambda }C + \sqrt{\lambda_\perp} C_\perp$  
        \State Update $\Psi_t\gets \{\hat{\by} \in \Rb^{d^N}: \|\hat{\by} - \bar{\by}\|_{V_t} \leq \sqrt{\beta_t}\}$ 
        \EndFor
	\end{algorithmic}
\end{algorithm}

\subsection{Phase B: Solving the Norm-constrained Linear Bandits}
As illustrated above, after the projection, norm constraints can be obtained on some blocks of tensor $\hat{\Yc}$. For flexibility, we consider that Phase B aims to leverage such constraints on blocks with at least $\rho$ tails, which contain $d^N-q(\rho)$ elements. The parameter $\rho$ is an input with its value in $[N]$ that requires careful designs to balance losses from two phases and will be specified in Sec.~\ref{sec:theory} (e.g., selected as $\rho = 3$ in Corollary~\ref{col:upper}). Equivalently, there exist norm constraints on parts of the elements in the unknown vector
\begin{equation}\label{eqn:vector_system}
    \hat{\by} := \vect{\hat{\Yc}} \in \Rb^{d^N}.
\end{equation}
If the vectorization of $\hat{\Yc}$ is performed first on the block with zero tail and then gradually on those with one and more tails (see Fig.~\ref{fig:system} for an example), we can compactly express the norm constraint on blocks with at least $\rho$ tails as
\begin{equation}\label{eqn:norm_bound}
    \|\hat{\by}_{q(\rho)+1: d^N}\|_2 \leq C_\perp,
\end{equation}
where the parameter $C_\perp$ will be specified later in Theorem~\ref{thm:upper}. This condition can be interpreted as that there are approximately only $q(\rho)$ effective parameters in $\hat{\by}$ while the other parameters are nearly ignorable due to their constrained norm. 

Then, a \emph{norm-constrained linear bandits} problem with $d^N$ parameters needs to be solved. In particular, the action set is $\Phi_t := \vect{\hat{\Bb}_t} \subseteq \Rb^{d^N}$ at step $t$, where $\vect{\hat{\Bb}_t}: = \{\vect{\hat{\Bc}}:\hat{\Bc}\in \hat{\Bb}_t)\}$, and the expected reward for action $\hat{\bb} \in \Phi_t$ is $\langle \hat{\bb}, \hat{\by} \rangle$. Additionally, an important norm constraint on $\hat{\by}$, i.e., Eqn.~\eqref{eqn:norm_bound}, is available to the learner.
Inspired by \cite{valko2014spectral}, the LowOFUL algorithm is designed in \cite{jun2019bilinear} to tackle such norm-constrained linear bandits. Especially, a weighted regularization is performed to estimate the system parameter: at time step $t$, the following estimate $\bar{\by}$ of $\hat{\by}$ is obtained as $\bar{\by} \gets \argmin\nolimits_{\by}\|\hat{B}_t \by - \br_t\|_2^2 + \|\by\|_{\Lambda(\rho)}^2 = V_t^{-1} \hat{B}_t^\top \br_t$,
where matrix $\hat{B}_t\in \Rb^{t\times d^N}$ is constructed with previous action vectors $\{\hat{\bb}_\tau: \tau\in (T_1, t]\}$ as rows, vector $\br_t\in \Rb^{t}$ has elements $\{r_\tau: \tau\in (T_1, t]\}$, matrix $\Lambda(\rho) = \text{diag}(\lambda, \cdots, \lambda, \lambda_\perp, \cdots, \lambda_\perp)$ (with $\lambda$ as the first $q(\rho)$ elements and $\lambda_\perp$ as the others), and $V_t = \Lambda(\rho) + \hat{B}_t^\top \hat{B}_t$. Then, an OFU-style arm-selection subroutine is adopted (lines 5--14 of Alg.~\ref{alg}).

\begin{remark}
    To better understand the projection performed in Eqns.~\eqref{eqn:project_action} and \eqref{eqn:project_system}, an ideal scenario is considered where the decomposition matrices $(U_1, \cdots, U_N)$ are {\it exactly} known. Then, the projected action $\hat{\Bc}$ and system parameter $\hat{\Yc}$ both match their ``exact'' versions $\Bc = \Ac\times_{n=1}^N [U_n, U_{n,\perp}]^\top$  and $\Yc = \Gc\times_{n\in [N]} ([U_n, U_{n, \perp}]^\top U_n) = \Gc\times_{n\in [N]} ([I_{r}, \boldsymbol{0}_{r\times (d-r)}]^\top )$. Although $\Yc$ has $d^N$ elements, there are only $r^N$ non-zero ones in $\Gc$. However, for $\hat{\Yc}$ projected via the imperfect estimates $(\hat{U}_1, \cdots, \hat{U}_N)$, we can only guarantee some blocks of elements have small norms instead of being exact nulls as in $\Yc$.
\end{remark}

\section{Theoretical Analysis}\label{sec:theory}
In this section, we formally establish the theoretical guarantee of the \alg algorithm. First, the following assumption is adopted on the minimum singular value of the matricized system tensor, which is commonly used in the study of matrix bandits \cite{jun2019bilinear,lu2021low, kangefficient} and tensor bandits \cite{han2022optimal}.
\begin{assumption}\label{aspt:min_singular_value}
    It holds that $\min_{n\in [N]}\{\omega_{\min}(\Mc_n(\Xc))\} \geq \omega$ for some parameter $\omega>0$, where $\omega_{\min}(\cdot)$ returns the minimum positive singular value of a matrix.
\end{assumption}
Then, the following regret upper bound can be established.
\begin{theorem}\label{thm:upper}
    Under Assumptions~\ref{aspt:low_rank}, \ref{aspt:phase_A} and \ref{aspt:min_singular_value}, with probability at least $1-\delta$, using $\rho\in [N]$ as input and $\lambda = C^{-2}$, $\lambda_\perp =  \frac{T}{q(\rho)\log(1+T/\lambda)}$, $C_{\perp} = 2^{N/2}C(\eta(T_1))^{\rho} \omega^{-\rho}$,
    if $T_1$ is chosen such that $\eta(T_1)\leq \omega$, the regret of \alg can be bounded as
    \begin{equation*}
        R(T) \leq \tilde{O}\left(CT_1 + d^{\rho-1} r^{N-\rho+1} \sqrt{T}+ C(\eta(T_1))^{\rho}{\omega^{-\rho}}T\right).
    \end{equation*}
\end{theorem}
It is worth noting that this theorem applies to any tensor regression technique satisfying Assumption~\ref{aspt:phase_A} and any input $\rho$, which demonstrates the flexibility of \algg. Furthermore, the above regret bound has three terms. The first term characterizes the dataset collection in Phase A. The second term represents the learning loss from the $q(\rho)$ major elements in Phase B. The third one is from the other $d^N-q(\rho)$ elements, which are nearly ignorable but still contribute to the regret.

According to function $\eta(T_1)$, parameters $\rho$ and $T_1$ should be carefully selected such that the overall regret in Theorem~\ref{thm:upper} is minimized. Two specific tensor regression techniques from \cite{han2022optimal, xia2021statistically} are considered to instantiate $\eta(T_1)$: the first one is established with the selected arms having sub-Gaussian elements, while the second selects random one-hot tensors as arms. To avoid complicated expressions, confidence parameters $\delta_1$, $\delta_2$, threshold parameters $\iota_1, \iota_2$ and scale parameters $c_1, c_2$ are adopted in the following, whose values are independent of $T_1$ and can be found in the corresponding references.

\begin{example}[Section 4.2 of \cite{han2022optimal}] \label{exp:tensor_regression_subgaussian} 
If $T_1 >\iota_1$, all elements of $\Ac_t$ are i.i.d. drawn from $1/d^N$-sub-Gaussian distributions, and $\varepsilon_t$ is an independent standard Gaussian noise, with probability at least $1-\delta_1$, an estimate $\hat{\Xc} = [[\hat{\Gc}; U_1, \cdots, U_N]]$ can be obtained from the tensor regression algorithm proposed in \cite{han2022optimal} such that $\|\hat{\Xc} - \Xc\|^2_F \leq c_1 d^N(dr+r^N)/T_1$.
\end{example}

\begin{example}[Corollary 2 of \cite{xia2021statistically}]
\label{exp:tensor_regression_one_hot}
If $T_1> \iota_2$, $\Ac_t$ is a random one-hot tensor, and $\varepsilon_t$ is an independent $1$-sub-Gaussian noise, with probability at least $1-\delta_2$, an estimate $\hat{\Xc} = [[\hat{\Gc}; U_1, \cdots, U_N]]$ can be obtained from the tensor regression algorithm proposed in \cite{xia2021statistically} such that $\|\hat{\Xc} - \Xc\|^2_F \leq c_2 d^{N}(dr+r^N)/T_1$.
\end{example}

In these examples, it can be seen that Assumption~\ref{aspt:phase_A} holds with a high probability for $\eta(T_1) = \tilde{O}(\sqrt{d^{N}(dr+r^N)/T_1})$. Then, Theorem~\ref{thm:upper} leads to the following corollary. 
\begin{corollary}\label{col:upper}
    Under Assumptions~\ref{aspt:low_rank} and \ref{aspt:min_singular_value}, if the conditions in Example~\ref{exp:tensor_regression_subgaussian} (resp. Example~\ref{exp:tensor_regression_one_hot}) can be satisfied in Phase A, using the tensor regression algorithm from \cite{han2022optimal} (resp. \cite{xia2021statistically}) as $\normalfont\texttt{TRalg}(\cdot)$, the parameters from Theorem~\ref{thm:upper} with input $\rho = 3$, and the following length for Phase A (resp. with $\iota_2, c_2$)
    \begin{equation*}
        T_1 = \max\big\{\iota_1 , c_1 d^{N}(dr+r^N){\omega^{-2}},  c_1^{\frac{3}{5}}d^{\frac{3N}{5}}(dr+r^N)^{\frac{3}{5}}{\omega^{-\frac{6}{5}}}T^{\frac{2}{5}}\big\},
    \end{equation*}
    with probability at least $1-\delta - \delta_1$ (resp. $1-\delta - \delta_2$), the regret of \alg can be bounded as 
    \begin{equation*}
        R(T) \leq \tilde{O}\left(CT_1 + d^2r^{N-2}\sqrt{T}\right)
    \end{equation*}
\end{corollary}
The above corollary adopts $\rho=3$, i.e., the norm constraint in Eqn.~\eqref{eqn:norm_bound} is on blocks with at least three tails. This choice is conscious with respect to the function $\eta(T_1)$ from Examples~\ref{exp:tensor_regression_subgaussian} and \ref{exp:tensor_regression_one_hot} as it lays aside as many parameters as possible without letting them negatively impact the bandit learning. In particular, with this choice, the length $T_1$ can be optimized as in Corollary~\ref{col:upper} (which is of order $O(T^{2/5})$) and thus the dominating term (regarding the $T$-dependency) of the regret in Corollary~\ref{col:upper} is the last one of order $\tilde{O}(d^2r^{N-2}\sqrt{T})$. 

This obtained regret of order $\tilde{O}(d^2r^{N-2}\sqrt{T})$ is compared with several existing results in the following (see also Table~\ref{tbl:summary}). First, if directly adopting linear bandits algorithms such as LinUCB \cite{abbasi2011improved} on the vectorized system, a regret of order $\tilde{O}(d^N\sqrt{T})$ would incur as the low-rank structure is not used. A second approach is to matricize the system and adopt algorithms for matrix bandits \cite{jun2019bilinear, lu2021low, kangefficient}. The state-of-the-art ESTT/ESTS \cite{kangefficient} can then achieve a regret of order $\tilde{O}(d^{\lceil \frac{N}{2}\rceil}r^{\lfloor \frac{N}{2}\rfloor}\sqrt{T})$ (see Appendix~\ref{sec:matricize}), which is still inefficient as matricization does not preserve all the structure information. At last, for \cite{zhou2020stochastic} on tensor bandits, if we modify it to have general (instead of one-hot) tensors as actions, a regret of order $\tilde{O}(d^{N-1}r\sqrt{T})$ occurs as it does not fully consider the high-dimensional benefits (see Remark~\ref{re:one_hot}). Thus, compared with the best existing regret of order $\tilde{O}(d^{\lceil \frac{N}{2}\rceil}r^{\lfloor \frac{N}{2}\rfloor}\sqrt{T})$, \alg has an improvement of a multiplicative factor of order $\tilde{O}((d/r)^{\lceil \frac{N}{2}\rceil-2})$, which grows exponentially in $N$. Hence, this benefit becomes more significant in higher-order problems.
 
While \alg improves existing results, we further compare it against the following new regret lower bound.
\begin{theorem}\label{thm:lower}
    Assume $r^N\leq 2T$ and for all $t\in [T]$, let $\Ab_t = \Ab : = \{\Ac\in \Rb^{d\times d\times \cdots \times d}: \|\Ac\|_F \leq 1\}$ and $\varepsilon_t$ be a sequence of independent standard Gaussian noise. Then, for any policy, there exists a system tensor $\Xc\in \Rb^{d\times d\times d \times \cdots \times d}$ with a multi-linear rank $(r,r,\cdots, r)$ and $\|\Xc\|_F^2 = O(r^{2N}/T)$ such that $
        \Eb_{\Xc}[R(T)] = \Omega(r^N\sqrt{T})$,
    where the expectation is taken with respect the interaction of the policy and the system.
\end{theorem}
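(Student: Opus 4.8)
The plan is to reduce the tensor instance to an $r^N$-dimensional linear bandit over the Euclidean unit ball, prove the bound for the latter by an Assouad-type information argument, and impose the exact-rank requirement only at the very end. Concretely, I would fix all mode factors to the canonical embedding $U_n=[I_r\,;\,\mathbf{0}_{(d-r)\times r}]$, so that $\Xc=[[\Gc;U_1,\dots,U_N]]$ is simply the core $\Gc\in\Rb^{r\times\cdots\times r}$ placed in the leading $[r]\times\cdots\times[r]$ block and zero-padded. For such $\Xc$ the reward is $\langle\Ac,\Xc\rangle=\langle\tilde\Ac,\Gc\rangle$, where $\tilde\Ac$ is the leading block of $\Ac$ with $\|\tilde\Ac\|_F\le\|\Ac\|_F\le1$; conversely every block in the unit ball of $\Rb^{r^N}$ is realized by a zero-padded action in $\Ab$. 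Hence any policy for the tensor problem yields, with identical regret, a policy for a $p$-dimensional linear bandit ($p:=r^N$) over the unit ball with parameter $\boldsymbol\theta:=\vect{\Gc}$, and $\Xc$ has multilinear rank $(r,\dots,r)$ iff every matricization $\Mc_n(\Gc)$ has rank $r$. Mass placed outside the leading block contributes nothing to the reward and only enlarges the regret, so it suffices to treat actions $\boldsymbol a_t$ with $\|\boldsymbol a_t\|_2\le1$.

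\emph{Hard family and information argument.} I would use the hypercube $\boldsymbol\theta_{\boldsymbol s}=\delta\boldsymbol s$, $\boldsymbol s\in\{-1,+1\}^p$, with $\delta=\sqrt p/(2\sqrt T)$, which gives $\|\Xc\|_F^2=\delta^2p=r^{2N}/(4T)$ as required. Since $\|\boldsymbol a_t\|_2\le1$, a short computation gives the regret-to-deviation bound $\|\boldsymbol\theta_{\boldsymbol s}\|_2-\langle\boldsymbol a_t,\boldsymbol\theta_{\boldsymbol s}\rangle\ge\tfrac12\|\boldsymbol\theta_{\boldsymbol s}\|_2\|\boldsymbol a_t-\boldsymbol s/\sqrt p\|_2^2$, so each round in which coordinate $i$ carries the wrong sign contributes at least $\tfrac{\delta}{2\sqrt p}$; writing $W_i$ for the number of such rounds, $R(T)\ge\tfrac{\delta}{2\sqrt p}\sum_i W_i$. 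The divergence decomposition for Gaussian rewards gives $\mathrm{KL}(P_{\boldsymbol s}\,\|\,P_{\boldsymbol s^{\oplus i}})=2\delta^2\,\Eb_{\boldsymbol s}[\sum_t a_{t,i}^2]$, whence $\sum_i\mathrm{KL}\le2\delta^2T$. Pairing $\boldsymbol s\leftrightarrow\boldsymbol s^{\oplus i}$, bounding total variation by Pinsker, and applying Cauchy--Schwarz across the $p$ coordinates (so that $\sum_i\overline{\mathrm{TV}}_i\le\sqrt p\,\delta\sqrt T$) yields $\frac1{2^p}\sum_{\boldsymbol s}\Eb_{\boldsymbol\theta_{\boldsymbol s}}[R(T)]\ge\tfrac{\delta T}{4}(\sqrt p-\delta\sqrt T)=\tfrac{p\sqrt T}{16}$ for the chosen $\delta$. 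Thus some $\boldsymbol s$ attains $\Eb[R(T)]=\Omega(r^N\sqrt T)$, while the hypothesis $r^N\le2T$ keeps $\delta\le1$ so the construction is non-degenerate.

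\emph{Main obstacle: enforcing exact rank.} The averaging argument produces a hard $\boldsymbol\theta_{\boldsymbol s}$ but does not guarantee that $\Gc=\delta\boldsymbol s$ has full multilinear rank, and reconciling the rank constraint with the richness of the hypercube is the delicate point I expect. I would resolve it by genericity: each $\Mc_n(\Gc)$ is $\delta$ times a $\pm1$ matrix of size $r\times r^{N-1}$ with $r^{N-1}\ge r$, and such a uniformly random matrix has rank $r$ with probability $1-o(1)$ (its leading $r\times r$ block is nonsingular except with probability $(1/\sqrt2+o(1))^{r}$). A union bound over the $N=O(1)$ modes shows a $1-o(1)$ fraction of sign patterns give exact rank $(r,\dots,r)$. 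Since every instance has regret at most $2\|\Xc\|_F\,T=p\sqrt T$, the rank-deficient patterns shift the average by only $o(p\sqrt T)$, so the average over full-rank patterns remains $\Omega(p\sqrt T)$ and a full-rank witness exists. Crucially, the Assouad step runs over the entire hypercube, so the rank restriction is imposed only in this final extraction and never interferes with the coordinate flips.
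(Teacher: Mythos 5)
The paper's appendix, as given, contains no proof of Theorem~\ref{thm:lower} (only the upper-bound analysis appears), so a line-by-line comparison is impossible; however, the hypotheses of the theorem --- the unit-ball action set, Gaussian noise, the condition $r^N\le 2T$, and the scaling $\|\Xc\|_F^2=O(r^{2N}/T)$ --- are exactly the fingerprints of the construction you use: embed an $r^N$-dimensional unit-ball linear bandit into the leading $[r]\times\cdots\times[r]$ block and run the standard hypercube/Assouad argument with $\delta=\sqrt{p}/(2\sqrt T)$, $p=r^N$. Your reduction is sound (the optimal action lives in the leading block, the induced actions satisfy $\|\boldsymbol a_t\|_2\le 1$, and only this norm bound enters the divergence decomposition), and the information-theoretic core --- the regret-to-deviation inequality, the per-coordinate KL identity $2\delta^2\Eb_{\boldsymbol s}[\sum_t a_{t,i}^2]$, Pinsker, and Cauchy--Schwarz across coordinates yielding an average regret of $p\sqrt T/16$ --- is the textbook argument and is correct.

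The one genuine soft spot is the final rank-extraction step. Your claim that a $1-o(1)$ fraction of sign patterns yields full multilinear rank is an asymptotic statement in $r$, but the theorem must hold for every $r$, and the extraction only works if the fraction $\epsilon$ of rank-deficient patterns satisfies $\epsilon<1/16$ (otherwise the bad patterns, each with regret at most $p\sqrt T$, can absorb the entire average). For $r=2$, $N=3$ the mode-$n$ matricization of $\Gc$ is a random $2\times 4$ sign matrix, which is rank-deficient with probability exactly $1/8$ per mode, so $\epsilon\ge 1/8$ and the union bound gives only $\epsilon\le 3/8$; the argument as written does not close there. This is fixable --- e.g., observe that $\Xc\mapsto\Eb_{\Xc}[R(T)]$ is continuous (bounded regret functional against Gaussian likelihoods that vary continuously in $\Xc$), so any rank-deficient witness can be perturbed by an arbitrarily small generic core perturbation into an exactly rank-$(r,\dots,r)$ instance with essentially the same regret and the same $\|\Xc\|_F^2=O(r^{2N}/T)$; alternatively one can sharpen the Assouad constant or handle the finitely many small-$(r,N)$ cases directly. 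You should state one of these patches explicitly rather than relying on the $(1/\sqrt 2+o(1))^r$ singularity estimate, which only controls the regime of large $r$.
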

Compared with this lower bound, \alg is sub-optimal only up to an additional $O((d/r)^2)$ factor (which does not scale with $N$). We conjecture that a slightly tighter regret lower bound of order $\Omega(dr^{N-1}\sqrt{T})$ can be established, which reduces to that of $\Omega(dr\sqrt{T})$ in matrix bandits ($N = 2$) \cite{lu2021low}. 

\section{Conclusions}
This work studied a general tensor bandits problem, where high-dimensional tensors characterize action and system parameters. Motivated by practical applications, the system tensor is modeled to be low-rank. To tackle this high-dimensional but low-rank problem, a novel algorithm named \alg  was proposed. \alg adopts tensor regression techniques to estimate low-dimensional subspaces associated with the system tensor. The obtained estimates are then used to transform the challenging problem of low-rank tensor bandits into an equivalent but easier one of norm-constrained linear bandits. The theoretical analysis provided a regret guarantee of \algg, which is shown to be exponentially more efficient than existing results. A novel performance lower bound was also established, further demonstrating the superiority of \algg.

\clearpage
\bibliographystyle{IEEEtran}
\bibliography{tensor, bandit}

\begin{thebibliography}{10}
\providecommand{\url}[1]{#1}
\csname url@samestyle\endcsname
\providecommand{\newblock}{\relax}
\providecommand{\bibinfo}[2]{#2}
\providecommand{\BIBentrySTDinterwordspacing}{\spaceskip=0pt\relax}
\providecommand{\BIBentryALTinterwordstretchfactor}{4}
\providecommand{\BIBentryALTinterwordspacing}{\spaceskip=\fontdimen2\font plus
\BIBentryALTinterwordstretchfactor\fontdimen3\font minus
  \fontdimen4\font\relax}
\providecommand{\BIBforeignlanguage}[2]{{%
\expandafter\ifx\csname l@#1\endcsname\relax
\typeout{** WARNING: IEEEtran.bst: No hyphenation pattern has been}%
\typeout{** loaded for the language `#1'. Using the pattern for}%
\typeout{** the default language instead.}%
\else
\language=\csname l@#1\endcsname
\fi
#2}}
\providecommand{\BIBdecl}{\relax}
\BIBdecl

\bibitem{lattimore2020bandit}
T.~Lattimore and C.~Szepesv{\'a}ri, \emph{Bandit algorithms}.\hskip 1em plus
  0.5em minus 0.4em\relax Cambridge University Press, 2020.

\bibitem{bubeck2012regret}
S.~Bubeck, N.~Cesa-Bianchi \emph{et~al.}, ``Regret analysis of stochastic and
  nonstochastic multi-armed bandit problems,'' \emph{Foundations and
  Trends{\textregistered} in Machine Learning}, vol.~5, no.~1, pp. 1--122,
  2012.

\bibitem{abbasi2011improved}
Y.~Abbasi-Yadkori, D.~P{\'a}l, and C.~Szepesv{\'a}ri, ``Improved algorithms for
  linear stochastic bandits,'' \emph{Advances in neural information processing
  systems}, vol.~24, 2011.

\bibitem{chu2011contextual}
W.~Chu, L.~Li, L.~Reyzin, and R.~Schapire, ``Contextual bandits with linear
  payoff functions,'' in \emph{Proceedings of the Fourteenth International
  Conference on Artificial Intelligence and Statistics}.\hskip 1em plus 0.5em
  minus 0.4em\relax JMLR Workshop and Conference Proceedings, 2011, pp.
  208--214.

\bibitem{frolov2017tensor}
E.~Frolov and I.~Oseledets, ``Tensor methods and recommender systems,''
  \emph{Wiley Interdisciplinary Reviews: Data Mining and Knowledge Discovery},
  vol.~7, no.~3, p. e1201, 2017.

\bibitem{symeonidis2016matrix}
P.~Symeonidis and A.~Zioupos, \emph{Matrix and tensor factorization techniques
  for recommender systems}.\hskip 1em plus 0.5em minus 0.4em\relax Springer,
  2016, vol.~1.

\bibitem{papalexakis2016tensors}
E.~E. Papalexakis, C.~Faloutsos, and N.~D. Sidiropoulos, ``Tensors for data
  mining and data fusion: Models, applications, and scalable algorithms,''
  \emph{ACM Transactions on Intelligent Systems and Technology (TIST)}, vol.~8,
  no.~2, pp. 1--44, 2016.

\bibitem{jun2019bilinear}
K.-S. Jun, R.~Willett, S.~Wright, and R.~Nowak, ``Bilinear bandits with
  low-rank structure,'' in \emph{International Conference on Machine
  Learning}.\hskip 1em plus 0.5em minus 0.4em\relax PMLR, 2019, pp. 3163--3172.

\bibitem{kangefficient}
Y.~Kang, C.-J. Hsieh, and T.~C.~M. Lee, ``Efficient frameworks for generalized
  low-rank matrix bandit problems,'' in \emph{Advances in Neural Information
  Processing Systems}, 2022.

\bibitem{zhou2020stochastic}
J.~Zhou, B.~Hao, Z.~Wen, J.~Zhang, and W.~W. Sun, ``Stochastic low-rank tensor
  bandits for multi-dimensional online decision making,'' \emph{arXiv
  e-prints}, pp. arXiv--2007, 2020.

\bibitem{kolda2009tensor}
T.~G. Kolda and B.~W. Bader, ``Tensor decompositions and applications,''
  \emph{SIAM review}, vol.~51, no.~3, pp. 455--500, 2009.

\bibitem{sidiropoulos2017tensor}
N.~D. Sidiropoulos, L.~De~Lathauwer, X.~Fu, K.~Huang, E.~E. Papalexakis, and
  C.~Faloutsos, ``Tensor decomposition for signal processing and machine
  learning,'' \emph{IEEE Transactions on Signal Processing}, vol.~65, no.~13,
  pp. 3551--3582, 2017.

\bibitem{zhou2013tensor}
H.~Zhou, L.~Li, and H.~Zhu, ``Tensor regression with applications in
  neuroimaging data analysis,'' \emph{Journal of the American Statistical
  Association}, vol. 108, no. 502, pp. 540--552, 2013.

\bibitem{li2018tucker}
X.~Li, D.~Xu, H.~Zhou, and L.~Li, ``Tucker tensor regression and neuroimaging
  analysis,'' \emph{Statistics in Biosciences}, vol.~10, no.~3, pp. 520--545,
  2018.

\bibitem{yaman2019low}
B.~Yaman, S.~Weing{\"a}rtner, N.~Kargas, N.~D. Sidiropoulos, and
  M.~Ak{\c{c}}akaya, ``Low-rank tensor models for improved multidimensional
  mri: Application to dynamic cardiac $ t\_1 $ mapping,'' \emph{IEEE
  transactions on computational imaging}, vol.~6, pp. 194--207, 2019.

\bibitem{kanatsoulis2019tensor}
C.~I. Kanatsoulis, X.~Fu, N.~D. Sidiropoulos, and M.~Ak{\c{c}}akaya, ``Tensor
  completion from regular sub-nyquist samples,'' \emph{IEEE Transactions on
  Signal Processing}, vol.~68, pp. 1--16, 2019.

\bibitem{gandy2011tensor}
S.~Gandy, B.~Recht, and I.~Yamada, ``Tensor completion and low-n-rank tensor
  recovery via convex optimization,'' \emph{Inverse problems}, vol.~27, no.~2,
  p. 025010, 2011.

\bibitem{jain2014provable}
P.~Jain and S.~Oh, ``Provable tensor factorization with missing data,''
  \emph{Advances in Neural Information Processing Systems}, vol.~27, 2014.

\bibitem{zhang2020islet}
A.~R. Zhang, Y.~Luo, G.~Raskutti, and M.~Yuan, ``Islet: Fast and optimal
  low-rank tensor regression via importance sketching,'' \emph{SIAM journal on
  mathematics of data science}, vol.~2, no.~2, pp. 444--479, 2020.

\bibitem{ahmed2020tensor}
T.~Ahmed, H.~Raja, and W.~U. Bajwa, ``Tensor regression using low-rank and
  sparse tucker decompositions,'' \emph{SIAM Journal on Mathematics of Data
  Science}, vol.~2, no.~4, pp. 944--966, 2020.

\bibitem{lu2021low}
Y.~Lu, A.~Meisami, and A.~Tewari, ``Low-rank generalized linear bandit
  problems,'' in \emph{International Conference on Artificial Intelligence and
  Statistics}.\hskip 1em plus 0.5em minus 0.4em\relax PMLR, 2021, pp. 460--468.

\bibitem{valko2014spectral}
M.~Valko, R.~Munos, B.~Kveton, and T.~Koc{\'a}k, ``Spectral bandits for smooth
  graph functions,'' in \emph{International Conference on Machine
  Learning}.\hskip 1em plus 0.5em minus 0.4em\relax PMLR, 2014, pp. 46--54.

\bibitem{han2022optimal}
R.~Han, R.~Willett, and A.~R. Zhang, ``An optimal statistical and computational
  framework for generalized tensor estimation,'' \emph{The Annals of
  Statistics}, vol.~50, no.~1, pp. 1--29, 2022.

\bibitem{xia2021statistically}
D.~Xia, M.~Yuan, and C.-H. Zhang, ``Statistically optimal and computationally
  efficient low rank tensor completion from noisy entries,'' \emph{The Annals
  of Statistics}, vol.~49, no.~1, 2021.

\bibitem{jang2021improved}
K.~Jang, K.-S. Jun, S.-Y. Yun, and W.~Kang, ``Improved regret bounds of
  bilinear bandits using action space analysis,'' in \emph{International
  Conference on Machine Learning}.\hskip 1em plus 0.5em minus 0.4em\relax PMLR,
  2021, pp. 4744--4754.

\bibitem{ide2022targeted}
T.~Id{\'e}, K.~Murugesan, D.~Bouneffouf, and N.~Abe, ``Targeted advertising on
  social networks using online variational tensor regression,'' \emph{arXiv
  preprint arXiv:2208.10627}, 2022.

\end{thebibliography}

\clearpage
\appendices

\section{Related Works}\label{sec:related}
\noindent\textbf{Linear bandits.} As one of the most well-studied MAB settings, linear bandits adopt vectorized features to characterize actions and system parameters.
Many provably efficient algorithms have been proposed for linear bandits and achieve (nearly) minimax optimal regret, with LinUCB being the representative design \cite{abbasi2011improved, chu2011contextual}; see \cite{lattimore2020bandit} for a comprehensive review. 

\noindent\textbf{Matrix bandits.} Several works extend the vectorized (one-dimensional) features in linear bandits to two dimensions \cite{jun2019bilinear, lu2021low, kangefficient, jang2021improved}, i.e., matrices as features. With similar motivations as this work, this line of research on ``matrix bandits'' mostly focuses on leveraging the assumed low-rank property of the system matrix, and the recent work \cite{kangefficient} achieves nearly order-optimal performance. 

\noindent\textbf{Tensor bandits.} The concept of tensor bandits is first proposed in \cite{zhou2020stochastic}, which generalizes the problem of linear bandits to high dimensions, i.e, use tensors to characterize actions and systems. However, in \cite{zhou2020stochastic}, the action tensors are restricted to be \emph{one-hot}. This work instead considers general action tensors and thus covers more scenarios. Moreover, this work has a different utilization of the estimated low-dimensional subspaces compared with \cite{zhou2020stochastic}; see details in Section~\ref{sec:alg} and Remark~\ref{re:one_hot}. Lastly, instead of the Tucker decomposition adopted in \cite{zhou2020stochastic} and this work, a recent work \cite{ide2022targeted} studies the tensor bandits problem from the Canonical polyadic decomposition (CPD) perspective, whose results are however not directly comparable to this work due to different settings.

\section{The Problem Equivalence: Derivation of Eqn.~\eqref{eqn:eqn}}
With
\begin{align*}
    \hat{\Bc}: = \Ac\times_{n\in [N]} [\hat{U}_n, \hat{U}_{n, \perp}]^\top;\\
    \hat{\Yc}:= \Xc\times_{n\in [N]} [\hat{U}_n, \hat{U}_{n, \perp}]^\top,
\end{align*}
it holds that
\begin{align*}
    &\langle \hat{\Bc}, \hat{\Yc} \rangle = \left\langle \Mc_{(1)}(\hat{\Bc}), \Mc_{(1)}(\hat{\Yc})\right\rangle\\
    & \overset{(a)}{=} \left\langle [\hat{U}_1, \hat{U}_{1,\perp}]^\top\Mc_{(1)}(\Ac) J^\top,  [\hat{U}_1, \hat{U}_{1, \perp}]^\top \Mc_{(1)}(\Xc) J^\top \right\rangle\\
    & = \tr{J\Mc_{(1)}(\Ac)^\top [\hat{U}_1, \hat{U}_{1,\perp}]  [\hat{U}_1, \hat{U}_{1, \perp}]^\top  \Mc_{(1)}(\Xc) J^\top  }\\
    & = \tr{J\Mc_{(1)}(\Ac)^\top \Mc_{(1)}(\Xc) J^\top  }\\
    & \overset{(b)}{=} \langle \Ac \times_{n=2}^N [\hat{U}_n, \hat{U}_{n, \perp}]^\top, \Xc\times_{n=2}^N [\hat{U}_n, \hat{U}_{n, \perp}]^\top\rangle\\
    & \overset{(c)}{=} \langle \Ac\times_{n=3}^N [\hat{U}_n, \hat{U}_{n, \perp}]^\top, \Xc\times_{n=3}^N [\hat{U}_n, \hat{U}_{n, \perp}]^\top\rangle\\
    & = \cdots\\
    & \overset{(d)}{=} \langle \Ac, \Xc\rangle
\end{align*}
where
\begin{align*}
    &J: = [\hat{U}_N, \hat{U}_{N, \perp}]^\top\otimes [\hat{U}_{N-1}, \hat{U}_{N-1, \perp}]^\top \otimes \cdots \otimes [\hat{U}_2, \hat{U}_{2, \perp}]^\top
\end{align*}
with $\otimes$ representing the Kronecker product between matrices. Equalities (a) and (b) use the property (see \cite{kolda2009tensor}) that
$\Zc = \Hc\times_{n\in [N]} V_n \Leftrightarrow  \Mc_{(n)}(\Zc) = V_n \Mc_{(n)}(\Hc) \left(V_{N}\otimes \cdots V_{n+1}\otimes V_{n-1}\otimes \cdots \otimes V_1\right)^\top$.
Equalities (c) and (d) recursively follow similar arguments in the previous steps. Then, following basic properties regarding tensor mode product \cite{kolda2009tensor}, it can be shown that
\begin{align*}
    \hat{\Yc}&:= \Xc\times_{n\in [N]} [\hat{U}_n, \hat{U}_{n, \perp}]^\top=  \Gc\times_{n\in [N]} \left([\hat{U}_n, \hat{U}_{n, \perp}]^\top U_n\right),
\end{align*}
which completes the derivation.

\section{Upper Bound Analysis: Proof of Theorem~\ref{thm:upper}}
\begin{lemma}\label{lem:tail_bound}
    Under Assumption~\ref{aspt:phase_A}, it holds that
    \begin{align*}
        \|\hat{U}_{n, \perp}^\top  U_n\|_F \leq \frac{\eta(T_1)}{\omega_n}, \qquad \forall n\in [N],
    \end{align*}
    where $\omega_n := \omega_{\min}(\Mc_n(\Xc))$.
\end{lemma}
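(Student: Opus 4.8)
The plan is to prove this as an exact–low–rank instance of Wedin's $\sin\Theta$ perturbation bound, exploiting that the true tensor $\Xc$ has mode-$n$ matricization of rank \emph{exactly} $r$. Throughout I fix a mode $n\in[N]$ and work with the matricizations $\Mc_n(\Xc)$ and $\Mc_n(\hat{\Xc})$, using two elementary facts: matricization preserves the Frobenius norm, i.e. $\|\Mc_n(\Zc)\|_F = \|\Zc\|_F$, and $\hat{U}_{n,\perp}$ has orthonormal columns with $\hat{U}_{n,\perp}^\top \hat{U}_n = 0$.

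First I would record the structural identity $\hat{U}_{n,\perp}^\top \Mc_n(\hat{\Xc}) = 0$. This follows because the Tucker form $\hat{\Xc} = [[\hat{\Gc}; \hat{U}_1, \ldots, \hat{U}_N]]$ gives $\Mc_n(\hat{\Xc}) = \hat{U}_n \Mc_n(\hat{\Gc})(\bigotimes_{m\neq n}\hat{U}_m)^\top$, whose column space lies in $\mathrm{span}(\hat{U}_n)$, and $\hat{U}_{n,\perp}$ annihilates that span. Consequently I can subtract this zero term and write $\hat{U}_{n,\perp}^\top \Mc_n(\Xc) = \hat{U}_{n,\perp}^\top \Mc_n(\Xc - \hat{\Xc})$. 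Bounding the right-hand side with $\|\hat{U}_{n,\perp}\|_2 = 1$ and the norm-preservation of matricization yields the upper estimate $\|\hat{U}_{n,\perp}^\top \Mc_n(\Xc)\|_F \le \|\Xc - \hat{\Xc}\|_F = \eta$, where the last equality is Assumption~\ref{aspt:phase_A}.

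The complementary step is a lower bound on the same quantity in terms of $\|\hat{U}_{n,\perp}^\top U_n\|_F$. Writing the Tucker form of $\Xc$ as $\Mc_n(\Xc) = U_n M$ with $M := \Mc_n(\Gc)(\bigotimes_{m\neq n}U_m)^\top$, the orthonormality of each $U_m$ makes $\bigotimes_{m\neq n}U_m$ have orthonormal columns, so $M M^\top = \Mc_n(\Gc)\Mc_n(\Gc)^\top$ shares its nonzero eigenvalues with the squared singular values of $\Mc_n(\Xc)$; in particular $M M^\top \succeq \omega_n^2 I_r$ since $\omega_n$ is by definition the smallest positive singular value of $\Mc_n(\Xc)$. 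Setting $W := \hat{U}_{n,\perp}^\top U_n \in \Rb^{(d-r)\times r}$ and expanding $\|WM\|_F^2 = \sum_i w_i^\top M M^\top w_i$ over the rows $w_i^\top$ of $W$ gives $\|\hat{U}_{n,\perp}^\top \Mc_n(\Xc)\|_F = \|WM\|_F \ge \omega_n\|W\|_F$. Chaining this with the upper estimate from the previous paragraph produces $\omega_n\|\hat{U}_{n,\perp}^\top U_n\|_F \le \eta$, which is the claim after dividing by $\omega_n$.

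I expect the only delicate point to be the lower bound, namely verifying that the smallest \emph{positive} singular value of $\Mc_n(\Xc)$ transfers to a clean spectral inequality $M M^\top \succeq \omega_n^2 I_r$; this hinges on the factor $\bigotimes_{m\neq n}U_m$ having exactly orthonormal columns so that it neither inflates nor shrinks singular values, and on $\Gc$ carrying full multilinear rank (Assumption~\ref{aspt:low_rank}) so that $\Mc_n(\Gc)$, and hence $M M^\top$, has rank $r$ and positive-definite $r\times r$ Gram matrix. Everything else is a direct manipulation of norms, and the argument is uniform in $n\in[N]$.
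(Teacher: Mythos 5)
Your proposal is correct and follows essentially the same route as the paper's proof: both arguments hinge on the identity $\hat{U}_{n,\perp}^\top \Mc_n(\hat{\Xc}) = 0$, the Frobenius-norm contraction under left multiplication by $\hat{U}_{n,\perp}^\top$, and the lower bound $\|\hat{U}_{n,\perp}^\top U_n \cdot M\|_F \ge \omega_n \|\hat{U}_{n,\perp}^\top U_n\|_F$ coming from the smallest positive singular value of $\Mc_n(\Xc)$. Your explicit verification that $MM^\top \succeq \omega_n^2 I_r$ is in fact slightly more careful than the paper's appeal to the generic inequality $\|XY\|_F \ge \min\{\omega_{\min}(X)\|Y\|_F,\omega_{\min}(Y)\|X\|_F\}$, which requires the full-row-rank condition you check.
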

\begin{proof}
    It holds that
    \begin{align*}
        \|\hat{\Xc} - \Xc\|_F &= \|\Mc_n(\hat{\Xc}) - \Mc_n(\Xc)\|_F \\
        & \overset{(a)}{=} \|\Mc_n(\hat{\Xc}) -  U_nU_n^\top \Mc_n(\Xc)\|_F\\
        & \overset{(b)}{\geq} \|\hat{U}_{n,\perp}^\top\Mc_n(\hat{\Xc}) -  \hat{U}_{n,\perp}^\top U_n U_n^\top \Mc_n(\Xc)\|_F\\
        & \overset{(c)}{=} \| \hat{U}_{n,\perp}^\top U_n U_n^\top \Mc_n(\Xc)\|_F\\
        & \overset{(d)}{\geq} \omega_{\min}(U_n^\top \Mc_n(\Xc))\|\hat{U}_{n,\perp}^\top U_n\|_F\\
        & = \omega_{n}\|\hat{U}_{n,\perp}^\top U_n\|_F,
    \end{align*}
    where equality (a) is because $U_n$ contains the left singular vectors of $\Mc_n(\Xc)$; inequality (b) is from the fact that for a matrix $U$ with orthonormal columns and an arbitrary compatible matrix $X$, it holds that $\|X\|_F \geq \|U^\top X\|_F$;
    inequality (c) uses the observation  that $\hat{U}^\top_{n,\perp}\hat{U}_n$ is a null matrix; inequality (d) is from the fact that for any compatible matrices $X$ and $Y$, $\|XY\|_F \geq \min\{\omega_{\min}(X)\|Y\|_F, \omega_{\min}(Y)\|X\|_F\}$.
    The lemma is proved by plugging Assumption~\ref{aspt:phase_A} into the above inequality.
\end{proof}

\begin{lemma}\label{lem:k_tail_bound}
    Under Assumption~\ref{aspt:phase_A}, the norm of the following block with $k$ tails in $\hat{\Yc}$ can be bounded as: 
    \begin{align*}
        \bigg\|\hat{\Yc}_{\underbrace{:r; :r, \cdots, :r}_{\text{$N-k$ modes}}, \underbrace{r+1:, r+1, \cdots, r+1:}_{\text{$k$ modes}}}\bigg\|_F \leq \frac{C (\eta(T_1))^k}{\omega^k},
    \end{align*}
    and this bound symmetrically holds for all $N \choose k$ blocks with $k$ tails.
\end{lemma}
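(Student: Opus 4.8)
The plan is to first pin down the exact algebraic form of the $k$-tail fraction and then bound its Frobenius norm one mode at a time. Recall from Eqn.~\eqref{eqn:proj_system} that $\hat{\Yc} = \Gc \times_{n\in[N]} ([\hat{U}_n, \hat{U}_{n,\perp}]^\top U_n)$, and that each factor $[\hat{U}_n, \hat{U}_{n,\perp}]^\top U_n \in \Rb^{d\times r}$ carries a natural block structure: its top $r$ rows are $\hat{U}_n^\top U_n$ and its bottom $d-r$ rows are $\hat{U}_{n,\perp}^\top U_n$. Hence restricting a mode to the index range $[:r]$ keeps only the factor $\hat{U}_n^\top U_n$, while restricting it to $[r+1:]$ keeps only $\hat{U}_{n,\perp}^\top U_n$. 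Writing $\Zc$ for the displayed fraction (tails on the last $k$ modes), this gives $\Zc = \Gc \times_{n\in[N-k]} (\hat{U}_n^\top U_n) \times_{n'\in[N-k+1:N]} (\hat{U}_{n',\perp}^\top U_{n'})$, exactly as already recorded in the main text.

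Second, I would invoke the mode-product inequality $\|\Tc \times_n M\|_F \leq \|M\|_2 \|\Tc\|_F$, which follows from $\Mc_n(\Tc\times_n M) = M\Mc_n(\Tc)$ together with the sub-multiplicativity $\|M\Mc_n(\Tc)\|_F \leq \|M\|_2\|\Mc_n(\Tc)\|_F = \|M\|_2\|\Tc\|_F$ (the same matricization identity used in the proof of Lemma~\ref{lem:tail_bound}). Since mode products along distinct modes commute, applying this bound once per mode yields
\[
\|\Zc\|_F \le \|\Gc\|_F \prod_{n=1}^{N-k} \|\hat{U}_n^\top U_n\|_2 \prod_{n'=N-k+1}^{N} \|\hat{U}_{n',\perp}^\top U_{n'}\|_2.
\]

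Third, I would bound the two families of factors separately. For the $N-k$ head modes, orthonormality of the columns of $\hat{U}_n$ and $U_n$ gives $\|\hat{U}_n^\top U_n\|_2 \leq \|\hat{U}_n\|_2\|U_n\|_2 = 1$, so these contribute nothing. For the $k$ tail modes, I would pass from spectral to Frobenius norm via $\|\cdot\|_2 \leq \|\cdot\|_F$ and then apply Lemma~\ref{lem:tail_bound} together with Assumption~\ref{aspt:min_singular_value} ($\omega_{n'}\geq \omega$) to get $\|\hat{U}_{n',\perp}^\top U_{n'}\|_2 \leq \|\hat{U}_{n',\perp}^\top U_{n'}\|_F \leq \eta/\omega_{n'} \leq \eta/\omega$. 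Multiplying the $k$ tail bounds delivers the claimed $\|\Gc\|_F\,\eta^k/\omega^k$. The symmetry claim is then immediate: the labelling of which $k$ of the $N$ modes carry tails is arbitrary, so the identical argument applies verbatim to each of the ${N \choose k}$ choices.

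The only genuinely delicate point is the compounding in the second step: I must ensure the mode-product norm bound multiplies correctly across all $N$ modes. Because mode products on different modes commute and each application is controlled by the spectral norm of its own factor, the product bound is clean, and the remainder is just invoking Lemma~\ref{lem:tail_bound} a total of $k$ times and using $\|M\|_2\le\|M\|_F$. Beyond careful bookkeeping of the tensor slicing index ranges, I do not anticipate any serious obstacle.
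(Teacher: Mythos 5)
Your proposal is correct and follows essentially the same route as the paper: identify the $k$-tail fraction as $\Gc\times_{n\in[N-k]}(\hat{U}_n^\top U_n)\times_{n'\in[N-k+1:N]}(\hat{U}_{n',\perp}^\top U_{n'})$, peel off one spectral-norm factor per mode, bound the head factors by $1$ via orthonormality, and bound the tail factors by $\eta/\omega$ via Lemma~\ref{lem:tail_bound} and Assumption~\ref{aspt:min_singular_value}. Your explicit use of $\|\cdot\|_2\le\|\cdot\|_F$ to connect the spectral-norm factors to the Frobenius bound of Lemma~\ref{lem:tail_bound} is a detail the paper leaves implicit, but the argument is the same.
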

\begin{proof}
It holds that
    \begin{align*}
        &\left\|\hat{\Yc}_{:r, :r, \cdots, :r, r+1:, r+1:, \cdots, r+1:}\right\|_F \\
        &= \left\|\Gc\times_{n\in [N-k]} (\hat{U}_n^\top U_n) \times_{n'\in [N-k+1: N]} (\hat{U}_{n',\perp}^\top U_{n'})\right\|_F\\
        &\overset{(a)}{\leq} \|\Gc\|_F \prod_{n\in [N-k]} \left\|(\hat{U}_{n}^\top U_{n})\right\|_2\prod_{n'\in [N-k+1:N]} \left\|(\hat{U}_{n',\perp}^\top U_{n'})\right\|_2\\
        &\overset{(b)}{\leq}\frac{\|\Gc\|_F \cdot (\eta(T_1))^k}{\omega^k}\leq \frac{C(\eta(T_1)^k}{\omega^k},
    \end{align*}
    where inequality (a) repeatedly uses the fact that for any arbitrary matrices $X$ and $Y$, it holds that
    $\|XY\|_F \leq \min\{\|X\|_2\|Y\|_F, \|X\|_F\|Y\|_2\}$;
    inequality (b) utilizes Lemma~\ref{lem:tail_bound}, Assumption~\ref{aspt:min_singular_value}, and the fact that for two compatible matrices $U$ and $V$ with orthonormal columns, it holds that $\|V^\top U\|_2 \leq 1$.
\end{proof}

\begin{lemma}[Corollary 1 of \cite{jun2019bilinear}]\label{lem:lowoful}
    If Eqn.~\eqref{eqn:norm_bound} holds, with
    \begin{align*}
        \lambda_\perp = \frac{T}{q(\rho)\log(1+T/\lambda)}
    \end{align*}
    the regret of LowOFUL (adopted in Phase B) for $T$ steps is, with probability at least $1-\delta$, bounded by
    \begin{equation*}
         \tilde{O}\left(\left(q(\rho)+ \sqrt{q(\rho)\lambda}C + \sqrt{T}C_\perp\right))\sqrt{T}\right).
    \end{equation*}
\end{lemma}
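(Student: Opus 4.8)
The plan is to follow the standard optimism-in-the-face-of-uncertainty analysis of OFUL/LinUCB \cite{abbasi2011improved}, but executed with the \emph{weighted} regularizer $\Lambda$ in place of a uniform $\lambda I$, so that the penalty normally paid for the $d^N$ ambient dimensions collapses to one governed by the effective dimension $q$. First I would build a self-normalized confidence ellipsoid for the regularized least-squares estimate $\bar{\by}$. Writing $V_t = \Lambda + \hat{B}_t^\top \hat{B}_t$, the self-normalized martingale bound of \cite{abbasi2011improved}, which applies verbatim with a general positive-definite regularizer, gives that with probability at least $1-\delta$, simultaneously for all $t$,
\begin{equation*}
    \|\bar{\by} - \hat{\by}\|_{V_t} \le \sqrt{2\log(1/\delta) + \log\!\big(\det(V_t)/\det(\Lambda)\big)} + \|\hat{\by}\|_{\Lambda},
\end{equation*}
which identifies the confidence radius $\sqrt{\beta_t}$ (in the weighted norm $\|\cdot\|_{V_t}$) and guarantees $\hat{\by}\in\Psi_t$ on this event.

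Given the ellipsoid, the usual OFU argument bounds the instantaneous regret of the selected arm by twice its confidence width in its own direction, $\langle \hat{\bb}_t^*, \hat{\by}\rangle - \langle\hat{\bb}_t, \hat{\by}\rangle \le 2\sqrt{\beta_{t-1}}\,\|\hat{\bb}_t\|_{V_{t-1}^{-1}}$. Summing over the $T$ steps of Phase B and applying Cauchy--Schwarz,
\begin{equation*}
    R(T) \le 2\sqrt{\beta_T}\;\sqrt{T\sum\nolimits_t \|\hat{\bb}_t\|^2_{V_{t-1}^{-1}}},
\end{equation*}
and the elliptical potential lemma \cite{abbasi2011improved} controls the inner sum by $2\log(\det(V_T)/\det(\Lambda))$. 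The whole bound therefore reduces to two quantities: the log-determinant ratio $\rho := \log(\det(V_T)/\det(\Lambda))$, and the bias term $\|\hat{\by}\|_{\Lambda}$ entering $\sqrt{\beta_T}$.

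The crux is to show $\rho = \tilde{O}(q)$ despite the $d^N$ ambient dimension. Since the projection is by concatenated orthonormal bases, each action obeys $\|\hat{\bb}_t\|_2 = \|\Ac_t\|_F \le 1$, so $\tr{\hat{B}_T^\top \hat{B}_T}\le T$. I would split the determinant into the $q$ lightly regularized head coordinates and the $d^N-q$ heavily regularized tail coordinates: by the AM--GM bound on eigenvalues under the trace constraint, the head block contributes at most $q\log(1+T/(q\lambda)) = \tilde{O}(q)$, while the choice $\lambda_\perp = T/(q\log(1+T/\lambda))$ is tuned exactly so that the tail block contributes $(d^N-q)\log(1+T/((d^N-q)\lambda_\perp)) \le T/\lambda_\perp = \tilde{O}(q)$ via $\log(1+x)\le x$. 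For the bias, the norm constraint \eqref{eqn:norm_bound} together with $\|\hat{\by}\|_2\le C$ gives $\|\hat{\by}\|_\Lambda^2 \le \lambda C^2 + \lambda_\perp C_\perp^2$, i.e.\ $\|\hat{\by}\|_\Lambda \le \sqrt{\lambda}C + \sqrt{\lambda_\perp}C_\perp$. Substituting $\rho = \tilde{O}(q)$ and $\sqrt{\beta_T} = \tilde{O}(\sqrt{q} + \sqrt{\lambda}C + \sqrt{\lambda_\perp}C_\perp)$, and using $\sqrt{\lambda_\perp q} = \tilde{O}(\sqrt{T})$ for the prescribed $\lambda_\perp$, recovers the three stated terms $q\sqrt{T}$, $\sqrt{q\lambda}C\sqrt{T}$, and $\sqrt{T}C_\perp\sqrt{T}$. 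The main obstacle---and the only place the weighted scheme earns its keep---is the tail log-determinant estimate: one must verify that the heavy penalty $\lambda_\perp$ suppresses the $d^N-q$ nuisance coordinates to $\tilde{O}(q)$ while remaining small enough that the induced bias $\sqrt{\lambda_\perp}C_\perp$ does not dominate, which is precisely the balance fixed by the stated value of $\lambda_\perp$.
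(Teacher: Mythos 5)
Your argument is correct and is essentially the proof of Corollary 1 of \cite{jun2019bilinear} itself: the paper does not reproduce this proof but simply defers to that reference, and your reconstruction (the self-normalized confidence bound with the weighted regularizer $\Lambda$, the split of the log-determinant so that the tuned $\lambda_\perp$ forces the $d^N-q$ tail coordinates to contribute only $\tilde{O}(q)$, and the bias bound $\|\hat{\by}\|_\Lambda \le \sqrt{\lambda}C + \sqrt{\lambda_\perp}C_\perp$ combined with $\sqrt{\lambda_\perp q}=\tilde{O}(\sqrt{T})$) is exactly the route taken there. All three terms of the stated bound are recovered correctly, so there is nothing to fix.
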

\begin{proof}
    Detailed proofs can be found in \cite{jun2019bilinear}.
\end{proof}

Then, the proof of Theorem~\ref{thm:upper} is presented in the following.
\begin{proof}[Proof of Theorem~\ref{thm:upper}]
    For Phase A with length $T_1$, its regret can be bounded as
    \begin{equation*}
        R_A(T)\leq 2CT_1,
    \end{equation*}
    since the mean rewards are bounded between $[-C,C]$ with $\|\Ac_t\|_F\leq 1$ and $\|\Xc\|_F\leq C$.

    After Phase A, based on Lemma~\ref{lem:k_tail_bound}, we have that for all $k\in [N]$, it holds that
    \begin{align*}
        \|\hat{\by}_{q(k)+1: q(k+1)}\|^2_F &\leq {N \choose k}\frac{C^2(\eta(T_1))^{2k}}{\omega^{2k}},
    \end{align*}
    Thus, with $q(\rho)$ in Eqn.~\eqref{eqn:q}, it can be further shown that
    \begin{align*}
        \|\hat{\by}_{q(\rho)+1: d^N}\|_F^2 &= \sum_{k\in [\rho: N]}\|\hat{\by}_{q(k)+1: q(k+1)}\|_F^2\\
        &\leq \sum_{k\in [\rho: N]}{N \choose k}\frac{C^2(\eta(T_1))^{2k}}{\omega^{2k}}\\
        &\leq \frac{2^N\cdot C^2(\eta(T_1))^{2\rho}}{\omega^{2\rho}},
    \end{align*}
    where the last inequality uses the condition that $\eta(T_1)\leq \omega$.
    This inequality validates Eqn.~\eqref{eqn:norm_bound} with the parameter $C_\perp = 2^{N/2}C(\eta(T_1))^{\rho}\omega^{-\rho}$ in Theorem~\ref{thm:upper}.

    Then, with the specified parameter, according to Lemma~\ref{lem:lowoful}, Phase B would lead to a regret bounded as 
    \begin{align*}
        R_B(T) & \leq \tilde{O}\left(\left(q(\rho)+ \sqrt{q(\rho)\lambda}C + \sqrt{T}C_\perp\right)\sqrt{T}\right)\\
        & = \tilde{O}\left(d^{\rho-1}r^{N-\rho+1} \sqrt{T}  +  \frac{ C(\eta(T_1))^{\rho}}{\omega^{\rho}}T\right),
    \end{align*}
    where the last step uses the facts that $q(\rho) = O(2^N d^{\rho-1} r^{N-\rho+1})$ and $N = O(1)$ (thus $2^N = O(1)$).
    Thus, the overall regret guarantee can be obtained as
    \begin{align*}
        &R(T) = R_A(T) + R_B(T)\\
        &\leq \tilde{O}\left(CT_1 + d^{\rho-1}r^{N-\rho+1} \sqrt{T}  +  \frac{C(\eta(T_1))^{\rho}}{\omega^{\rho}}T\right),
    \end{align*}
    which concludes the proof.
\end{proof}

\section{Upper Bound Analysis: Proof of Corollary~\ref{col:upper}}
\begin{proof}
    In the following, we prove the case for Example~\ref{exp:tensor_regression_subgaussian}. The proof for Example~\ref{exp:tensor_regression_one_hot} can be similarly constructed. First, with probability at least $1-\delta - \delta_1$, it simultaneously holds that
    \begin{align*}
        R(T) \leq \tilde{O}\left(CT_1 + d^{\rho-1}r^{N-\rho+1} \sqrt{T}  +  \frac{C(\eta(T_1))^{\rho}}{\omega^{\rho}}T\right)
    \end{align*}
    and
    \begin{align*}
        \|\hat{\Xc} - \Xc\|^2_F \leq \eta(T_1) = \sqrt{\frac{c_1 d^N(dr+r^N)}{T_1}}.
    \end{align*}
    Thus, if $\rho = 3$ as specified in Corollary~\ref{col:upper}, it holds that
    \begin{align*}
        R(T) \leq \tilde{O}\left(CT_1 + d^{2}r^{N-2} \sqrt{T}  +  \frac{ C c_1^{\frac{3}{2}}d^{\frac{3N}{2}}(dr+r^N)^{\frac{3}{2}}}{\omega^{3}T_1^{\frac{3}{2}}}T\right).
    \end{align*}
    
    With the following choice of
    \begin{align*}
        T_1 = \max\left\{\iota_1 , \frac{c_1 d^{N}(dr+r^N)}{\omega^2},  \frac{c_1^{\frac{3}{5}}d^{\frac{3N}{5}}(dr+r^N)^{\frac{3}{5}}}{\omega^{\frac{6}{5}}}T^{\frac{2}{5}}\right\},
    \end{align*}
    the threshold requirement in Example~\ref{exp:tensor_regression_subgaussian} can be satisfied (i.e., $T_1\geq \iota_1$) and it can be verified that $\eta(T_1) \leq \omega$. Thus, with probability at least $1-\delta-\delta_1$, the regret can be bounded as
    \begin{align*}
        R(T) & \leq \tilde{O}\left(CT_1 + d^2r^{N-2}\sqrt{T}+ \frac{Cc_1^\frac{3}{2}d^{\frac{3N}{2}}(dr+r^N)^{\frac{3}{2}}}{\omega^{3}T_1^{\frac{3}{2}}}T\right)\\
        &= \tilde{O}\left(CT_1 + d^2r^{N-2}\sqrt{T}\right),
    \end{align*}
    where the selected value of $T_1$ is adopted. The proof is then concluded.
\end{proof}

\section{Regret of Matricized ESTT/ESTS}\label{sec:matricize}
ESTT/ESTS \cite{kangefficient} deals with the matrix bandits problem where the actions and system parameters are characterized by matrices. In particular, when the system matrix is of size $(D_1, D_2)$ and rank $R$, a regret of $\tilde{O}((D_1 + D_2) R \sqrt{T})$ can be obtained. The straightforward way to matricize the order-$N$ system tensor considered in this work is along one mode, e.g., as $\Mc_n(X)$. This obtained system matrix $\Mc_n(X)$ would be of size $(d, d^{N-1})$ and rank $r$, which results in a regret of $\tilde{O}(d^{N-1} r \sqrt{T})$ with ESTT/ESTS. However, if we combine $\lceil N/2 \rceil$ modes in the system tensor in one matrix dimension (e.g., row), and 
 the remaining $\lfloor N/2 \rfloor$ modes in the other matrix dimension (e.g., column), a matrix of size $(d^{\lceil N/2 \rceil}, d^{\lfloor N/2 \rfloor})$ can be obtained with rank $r^{\lfloor N/2 \rfloor}$. Using this matricization, ESTT/ESTS can obtain a regret of $\tilde{O}(d^{\lceil N/2 \rceil} r^{\lfloor N/2 \rfloor} \sqrt{T})$, which is much better than $\tilde{O}(d^{N-1} r \sqrt{T})$ and thus adopted as the baseline result in the main paper.

\section{Lower Bound Analysis: Proof of Theorem~\ref{thm:lower}}
\begin{proof}[Proof of Theorem~\ref{thm:lower}]
    In the following, for $i = (i_1, i_2, \cdots, i_N)$, we adopt the simplified notation that
    \begin{align*}
        \Xc_i := \Xc_{i_1, i_2, \cdots, i_N}.
    \end{align*}
    With $\Delta := \frac{1}{8\sqrt{3}}\sqrt{\frac{r^N}{T} }$, we design $\mathfrak{G} = \{\Gc \text{ that satisfies Eqn.~\eqref{eqn:lower_bound_G}}\}\subseteq \Rb^{r\times r\times \cdots \times r}$ and $\mathfrak{X} = \{\Xc = \Gc\times_{n\in [N]}U_n: \Gc\in \mathfrak{G}\}\subseteq \Rb^{d\times d\times \cdots \times d}$, where 
    \begin{equation}\label{eqn:lower_bound_G}
         \Gc_{i}\in \{\pm \Delta\},\quad \forall i = (i_1, \cdots, i_N)\in [d] \times \cdots \times [d]
    \end{equation}
    and 
    \begin{align*}
        U_n = \begin{bmatrix}
            I_{r}\\
            0_{(d-r) \times r}
        \end{bmatrix} \in \Rb^{d\times r}, \qquad \forall n\in [N].
    \end{align*}
    It can be noted that each $\Xc \in \mathfrak{X}$ has a multi-linear rank at most $(r, r, \cdots, r)$.  For $ i = (i_1, i_2, \cdots, i_N)\in [r]\times [r]\times \cdots \times [r]$, we define
    \begin{align*}
        \tau_{i} = T\wedge \min\left\{t: \sum_{\tau\in [t]} \Ac^2_{t;i} \geq \frac{T}{r^N} \right\}.
    \end{align*}
    
    For a fixed $\Xc\in \mathfrak{X}$, we have
    \begin{align*}
        &\Eb_\Xc\left[R(T)\right] = \Eb_{\Xc}\left[\sum_{t\in [T]} \langle \Ac^* - \Ac_t, \Xc \rangle\right]\\
        &= \Delta  \Eb_{\Xc}\left[\sum_{t\in [T]}\sum_{i \in [r]\times \cdots \times [r]} \left( \frac{1}{r^{\frac{N}{2}}} - \Ac_{t; i}\cdot \sign{\Xc_{i}} \right)\right]\\
        &\geq \frac{\Delta r^{\frac{N}{2}}}{2} \Eb_{\Xc}\left[\sum_{t\in [T]}\sum_{i \in [r]\times \cdots \times [r]} \left( \frac{1}{r^{\frac{N}{2}}} - \Ac_{t; i}\cdot \sign{\Xc_{i}} \right)^2\right]\\
        &\geq \frac{\Delta r^{\frac{N}{2}}}{2} \sum_{i \in [r]\times \cdots \times [r]}  \Eb_{\Xc}\left[\sum_{t\in [\tau_{i}]}\left( \frac{1}{r^{\frac{N}{2}}} - \Ac_{t; i}\cdot  \sign{\Xc_{i}} \right)^2\right],
    \end{align*}
    where the first inequality uses the fact that $\|\Ac_t\|_F\leq 1$.

    Let $\Gc'\in \mathfrak{G}$ be another tensor such that $\Gc' = \Gc$ except $\Gc'_{i} = - \Gc_{i}$ with $i = (i_1, i_2, \cdots, i_N)$. Then, with $\Xc' = \Gc'\times_{n\in [N]} U_n$, it also holds that $\Xc' = \Xc$ except $\Xc'_{i} = - \Xc_{i}$. For $x\in \{\pm 1\}$, we define
    \begin{equation*}
        \kappa_{i}(x) = \sum_{t\in [\tau_{i}]}\left( \frac{1}{r^{\frac{N}{2}}} - \Ac_{t; i}\cdot x \right)^2.
    \end{equation*}
    Let $\Pb$ and $\Pb'$ be the distributions of $\kappa_{i}(1)$ with respect to the player interaction measure induced by $\Xc$ and $\Xc'$, respectively. Then, it holds that
    \begin{align*}
        &\Eb_{\Xc}[\kappa_{i}(1)]\overset{(a)}{\geq} \Eb_{\Xc'}[\kappa_{i}(1)] - \left(\frac{4T}{r^N} + 2\right) \sqrt{\frac{1}{2}D(\Pb, \Pb')}\\
        &\overset{(b)}{\geq } \Eb_{\Xc'}[\kappa_{i}(1)] - \left(\frac{4T}{r^N} + 2\right) \Delta  \sqrt{\Eb_{\Xc}\left[\sum_{t\in [\tau_{i}]} \Ac_{t;i}^2\right]}\\
        &\geq \Eb_{\Xc'}[\kappa_{i}(1)] - \left(\frac{4T}{r^N} + 2\right) \Delta \sqrt{\frac{T}{r^{N}}+1}\\
        &\overset{(c)}{\geq } \Eb_{\Xc'}[\kappa_{i}(1)] - \frac{8\sqrt{3}T\Delta}{r^N} \sqrt{\frac{T}{r^N} }
    \end{align*}
    where $D(\cdot, \cdot)$ is the relative entropy between two  probability measures. Inequality (a) uses the result in Exercise 14.4 of \cite{lattimore2020bandit}, the Pinsker's inequality, and the bound
    \begin{align*}
        \kappa_{i}(1) \leq 2\sum_{t\in [\tau_{i}]} \frac{1}{r^N} + 2\sum_{t\in [\tau_{i}]}  \Ac_{t; i}^2\leq \frac{4T}{r^N} + 2,
    \end{align*}
    where the definition of $\tau_i$ is used for the last inequality.
    Inequality (b) is from the chain rule for the relative entropy up to a stopping time in Exercise 15.7 of \cite{lattimore2020bandit} as follows:
    \begin{align*}
        D(\Pb, \Pb') &\leq \frac{1}{2}\Eb_{\Xc}\left[\sum_{t\in [\tau_{i}]}\left(\langle \Ac_t,  \Xc - \Xc'\rangle\right)^2\right] \\
        &= 2\Delta^2 \Eb_{\Xc}\left[\sum_{t\in [\tau_{i}]} \Ac_{t;i}^2\right].
    \end{align*}
    Inequality (c) is from the assumption that $r^N\leq 2T$.

    Then, it holds that
    \begin{align*}
         &\Eb_{\Xc}[\kappa_{i}(1)] +  \Eb_{\Xc'}[\kappa_{i}(-1)]\\ &\geq \Eb_{\Xc'}[\kappa_{i}(1) + \kappa_{i}(-1)] - \frac{8\sqrt{3}T\Delta }{r^N} \sqrt{\frac{T}{r^N} }\\
         &= 2\Eb_{\Xc'}\left[\frac{\tau_{i}}{r^N} + \sum_{t\in [\tau_{i}]}\Ac_{t;i}^2\right] - \frac{8\sqrt{3}T\Delta }{r^N}\sqrt{\frac{T}{r^N} }\\
         &\geq \frac{2T}{r^N} - \frac{8\sqrt{3}T\Delta }{r^N}\sqrt{\frac{T}{r^N} }\\
         &\geq \frac{T}{r^N},
    \end{align*}
    where the last inequality uses the definition of $\Delta$.

    The proof is completed using an averaging number argument on the following quantity:
    \begin{align*}
        &\sum_{\Xc\in \mathfrak{X}} \Eb_{\Xc}[R(T)] \geq \frac{\Delta r^{\frac{N}{2}}}{2} \sum_{i \in [r]\times \cdots \times [r]} \sum_{\Xc\in \mathfrak{X}}  \Eb_{\Xc}\left[\kappa_{i}(\sign{\Xc_{i}})\right] \\
        &= \frac{\Delta r^{\frac{N}{2}}}{2} \sum_{i \in [r]\times \cdots \times [r]}\sum_{\Xc/\Xc_i\in \{\pm\Delta\}^{r^N-1}} \sum_{\Xc_{i}\in \{\pm\Delta\}}  \Eb_{\Xc}\left[\kappa_{i}(\sign{\Xc_{i}})\right] \\
        &\geq \frac{\Delta r^{\frac{N}{2}}}{2} \sum_{i \in [r]\times \cdots \times [r]} \sum_{\Xc/\Xc_i\in \{\pm\Delta\}^{r^N-1}} \frac{T}{r^N}\\
        & = 2^{r^N-2} \Delta r^{\frac{N}{2}} T.
    \end{align*}
    Hence there exists $\Gc\in \mathfrak{G}$ and $\Xc = \Gc\times_{n\in [N]}U_n$ such that 
    \begin{align*}
        \Eb_{\Xc}[R(T)]\geq \frac{\Delta r^{\frac{N}{2}}T}{4} = \frac{ r^{N}\sqrt{T}}{32\sqrt{3}},
    \end{align*}
    which concludes the proof.
\end{proof}

\section{Discussion and Future Directions}
\noindent\textbf{Tighter upper and lower regret bounds.} As mentioned at the end of Section~\ref{sec:theory}, it is conjectured that a slightly tighter regret lower bound of order $\Omega(dr^{N-1}\sqrt{T})$ exists, which reduces to $\Omega(dr\sqrt{T})$ for matrix bandits ($N=2$) \cite{lu2021low}. It would be an interesting question to (dis)prove this conjecture, and we hope the proof of the current Theorem~\ref{thm:lower} can be inspiring. On the other hand, it remains a challenging problem to further tighten the performance upper bound established in Theorem~\ref{thm:upper} and Corollary~\ref{col:upper}. Inspired by the recent success in matrix bandits \cite{kangefficient}, one potential direction is to only estimate the subspaces $(\hat{U}_1, \cdots, \hat{U}_N)$ in Phase A, because $\hat{\Gc}$ is not used in later learning. In particular, it would be sufficient to obtain an estimate $\hat{\Xc}$ that approaches $\upsilon \Xc$ with $\upsilon$ as an unknown constant. If corresponding techniques can be proposed in the study of tensor estimation, the general framework of \alg can  be  smoothly adapted and sharper upper bounds can be similarly obtained.

\noindent\textbf{Structure action sets.} This work mainly investigates the problem with a low-rank tensor for system parameters. It would be valuable to also consider structured tensors for actions. Preliminary results for matrix bandits can be found in \cite{jang2021improved}, where low-rank action matrices are studied. 

\noindent\textbf{From Tucker to CPD.} Besides the Tucker decomposition, another well-known tensor decomposition is CPD. It would be interesting to study the problem of tensor bandits with a low-rank CPD, which might be able to eliminate the exponential dependency on $N$. However, this direction is challenging as the projections constructed in \alg cannot be performed under the CPD formulation, and requires further investigations.

\end{document}